\let\MYcaption\@makecaption
\let\@makecaption\MYcaption
\newtheorem{definition}{Definition}
\newtheorem{proposition}{Proposition}
\newtheorem{remark}{Remark}
\newtheorem{problem}{Problem}
\newtheorem{assumption}{Assumption}
\DeclareMathOperator{\until}{\mathbf{U}}
\DeclareMathOperator{\always}{\mathbf{G}}
\DeclareMathOperator{\eventually}{\mathbf{F}}
\newcommand{\x}{\mathbf{x}}
\renewcommand{\u}{\mathbf{u}}   
\newcommand{\y}{\mathbf{y}}
\newcommand{\q}{\mathbf{q}}
\newcommand{\M}{\mathbf{M}}
\newcommand{\C}{\mathbf{C}}
\newcommand{\btau}{\bm{\tau}}
\begin{document}

\title{Trajectory Optimization for High-Dimensional Nonlinear Systems under STL Specifications}

\author{Vince Kurtz and Hai Lin}

\maketitle
\thispagestyle{empty}

\begin{abstract}
    Signal Temporal Logic (STL) has gained popularity in recent years as a specification language for cyber-physical systems, especially in robotics. Beyond being expressive and easy to understand, STL is appealing because the synthesis problem---generating a trajectory that satisfies a given specification---can be formulated as a trajectory optimization problem. Unfortunately, the associated cost function is nonsmooth and non-convex. As a result, existing synthesis methods scale poorly to high-dimensional nonlinear systems. In this letter, we present a new trajectory optimization approach for STL synthesis based on Differential Dynamic Programming (DDP). It is well known that DDP scales well to extremely high-dimensional nonlinear systems like robotic quadrupeds and humanoids: we show that these advantages can be harnessed for STL synthesis. We prove the soundness of our proposed approach, demonstrate order-of-magnitude speed improvements over the state-of-the-art on several benchmark problems, and demonstrate the scalability of our approach to the full nonlinear dynamics of a 7 degree-of-freedom robot arm.   
\end{abstract}

\section{Introduction and Related Work}\label{sec:intro}

A key challenge in Cyber-Physical Systems (CPS) research is accomplishing complex tasks with systems that include nontrivial physical dynamics. The fact that the system is required to perform complex tasks limits the usefulness of standard control techniques for setpoint tracking and classical notions of stability. At the same time, accounting for high-dimensional physical dynamics limits the applicability of many standard planning techniques from the computer science literature. 

Much recent progress towards the goal of effective CPS control has been made through the use of temporal logic \cite{lin2014mission}. Temporal logics like Linear Temporal Logic (LTL), Computation Tree Logic (CTL), and Signal Temporal Logic (STL), encode complex system behaviors as logical formulas. These logics are both expressive, in the sense that many complex behaviors can be encoded as temporal logic formulas, and interpretable, in the sense that most formulas can be easily understood by human readers. 

In this letter, we focus our attention on STL in particular. STL is defined over continuous-valued signals, making it especially well-suited to CPS with high-dimensional physical dynamics, where output trajectories provide convenient signals. Furthermore, STL is appealing because the synthesis problem can be formulated as a trajectory optimization problem \cite{belta2019formal}. This is possible through the use of {quantitative} (or ``robust'') semantics, which map a signal to a real-valued scalar.

\begin{figure}
    \centering
    \includegraphics[width=0.8\linewidth]{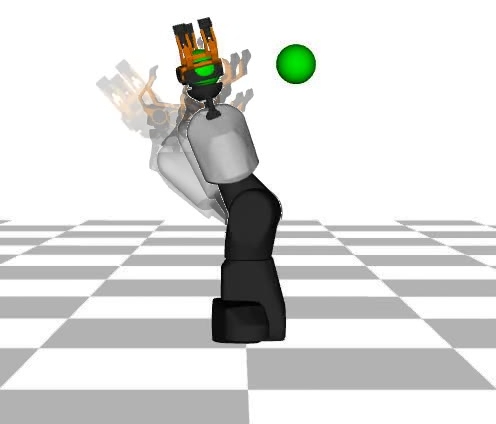}
    \caption{We present a scalable trajectory optimization method for high-dimensional nonlinear systems that uses Differential Dynamic Programming to achieve Signal Temporal Logic specifications. In this example, the full nonlinear dynamics of a 7-DoF robot manipulator are used to compute a trajectory such that the end-effector reaches either one of the two green targets.}
    \label{fig:manipulator_1}
\end{figure}

Unfortunately, the resulting optimization problem is both non-smooth and non-convex. Exact solutions can be obtained using Mixed Integer Linear Programming (MILP) \cite{sadraddini2015robust,belta2019formal,raman2014model}, but these results are limited to linear systems and suffer from scalability issues. Specifically, each additional timestep and predicate requires the addition of a new integer variable, meaning that the MILP approach is exponential in the specification length, specification complexity, and (often) system dimension. 

To overcome these scalability challenges, and to extend STL trajectory optimization methods to nonlinear systems, there has been a recent focus on smooth approximations of the STL {quantitative} semantics{\cite{pant2017smooth,mehdipour2019arithmetic,gilpin2021smooth,varnai2020robustness}}. The resulting optimization {over ``robustness values''} is still nonconvex, but its smoothness enables the use of gradient-based techniques like Sequential Quadratic Programming (SQP). {Furthermore, custom data structures can enable efficient evaluation of gradients for complex STL specifications \cite{leung2020back}}. These {gradient-based} methods significantly outperform MILP on large problems and can be applied to nonlinear systems. But they are susceptible to local minima, especially for high-dimensional systems with very complex nonlinear dynamics. 

In this letter, we propose an STL trajectory optimization approach for high-dimensional nonlinear systems based on Differential Dynamic Programming (DDP) \cite{mayne1966second}. DDP is a trajectory optimization approach that relies on iterative linear approximations of the system dynamics and quadratic approximations of the optimal cost-to-go. In this way, DDP is similar in spirit to SQP, but takes advantage of the structure of the trajectory optimization problem. As a result of these structural insights, DDP tends to be less vulnerable to local minima than SQP and other gradient-based methods \cite{tassa2012synthesis}, and has demonstrated remarkable effectiveness on very high-dimensional, highly nonlinear systems like robotic quadrupeds and humanoids \cite{tassa2012synthesis,tassa2014control,li2020hybrid,koenemann2015whole,mastalli20crocoddyl}.

We show how, for an expressive fragment of STL, DDP can be used to solve the synthesis problem. {Specifically, we introduce a systematic framework for designing cost functions associated with STL specifications, while also maintaining structural properties (smoothness, running cost) necessary for advanced trajectory optimization methods like DDP.} Our simulation results indicate a roughly order-of-magnitude improvement in solve time over gradient-based methods. Furthermore, this approach scales well to highly nonlinear systems. We demonstrate this scalability in simulation experiments with the full nonlinear dynamics of a 7 degree-of-freedom (DoF) robot arm. To the best of our knowledge, this is the first time that STL planning methods have been applied to the full nonlinear dynamics of a high-DoF torque-controlled manipulator. 

The remainder of this letter is organized as follows. Section \ref{sec:background} provides a formal problem statement and gives necessary background on STL and DDP. Section \ref{sec:main_results} presents our main result: an algorithm for using DDP to solve the STL synthesis problem. Simulation results are presented in Section \ref{sec:simulation}, and we conclude with Section \ref{sec:conclusion}.

\section{Background}\label{sec:background}

\subsection{System Definitions}

We assume that the physical dynamics of a CPS can be written as a set of nonlinear difference equations as follows:
\begin{equation}\label{eq:system}
\begin{gathered}
    \x_{t+1} = f(\x_t, \u_t) \\
    \y_t = g(\x_t, \u_t),
\end{gathered}
\end{equation}
where $\x_t \in \mathbb{R}^n$ is the system state, $\u_t \in \mathbb{R}^m$ is a control input, and $\y_t \in \mathbb{R}^p$ is the system output. We assume that $f$ and $g$ may be nonlinear, but are continuously differentiable. 

\subsection{Problem Formulation}

Given an STL specification, our primary task is to find a system trajectory that satisfies the specification:

\begin{problem}\label{prob:main}
    Given an STL specification $\varphi$, an initial condition $\x_0$, and system dynamics of the form (\ref{eq:system}), determine a control sequence $U = \u_0,\u_1,\dots,\u_T$ such that the resulting output signal $Y = \y_0,\y_1,\dots,\y_T$ satisfies the specification, $Y \vDash \varphi$.
\end{problem}

\subsection{Signal Temporal Logic}\label{subsec:stl}

In this section, we provide a brief overview of STL, its {quantitative} semantics, and smooth approximations thereof. For further details, we refer the interested reader to \cite{belta2019formal,pant2017smooth,mehdipour2019arithmetic,gilpin2021smooth,sadraddini2015robust} and related work therein.

Signal temporal logic is defined over continuous-valued signals $Y = \y_0,\y_1,\y_2,\dots$, which in our case represent output trajectories. We focus in this work on discrete-time signals, though STL can also be considered over continuous-time signals \cite{haghighi2019control}. 

The syntax of STL is given by 
\begin{equation}\label{eq:syntax}
    \varphi := \pi \mid \lnot \varphi \mid \varphi_1 \land \varphi_2 \mid \varphi_1 \until_{[t_1,t_2]} \varphi_2,
\end{equation}
where $\pi = (\mu^\pi(\y_t) \geq 0)$ is a predicate defined by the function $\mu^{\pi} : \mathbb{R}^p \to \mathbb{R}$. Negation ($\lnot$) and conjuction ($\land$) operators can be used to derive other boolean operators like disjuction ($\lor$) and implication ($\implies$). Similarly, the temporal operator until ($\varphi_1\until_{[t_1,t_2]}\varphi_2)$ can be used to construct eventually ($\eventually_{[t_1,t_2]}\varphi$) and always ($\always_{[t_1,t_2]}\varphi$) operators. In this work, we assume bounded-time STL formulas, i.e., {there exists a time $T$ at which the satisfaction status is fixed for any given signal}. 

We present the semantics, or meaning, of an STL formula using STL's {quantitative} semantics. These semantics associate a scalar value, called the ``robustness measure'', with a given signal $Y$. The signal satisfies the specification if the associated robustness measure is positive {and violates the specification if the robustness measure is negative. If the robustness measure is zero, satisfaction is undefined. For an in-depth discussion of these robust semantics and a comparison with standard STL semantics, we refer the interested reader to \cite{belta2019formal}}. Given a signal $Y$, let $(Y,t) = \y_t,\y_{t+1},\dots$ denote the suffix starting at timestep $t$. We can then define the STL {quantitative} semantics recursively as follows:

\begin{definition}[STL {Quantitative} Semantics]~\label{def:robust_semantics}
\begin{itemize}
    \item {$\rho^\varphi((Y,0)) > 0 \implies Y \vDash \varphi$}
    \item $\rho^\pi( (Y,t) ) = \mu^\pi(\y_t)$
    \item $\rho^{\lnot \varphi}( (Y,t) ) = - \rho^\varphi( (Y,t) )$
    \item $\rho^{\varphi_1 \land \varphi_2}( (Y,t) ) = \min\big(\rho^{\varphi_1}( (Y,t) ), \rho^{\varphi_2}( (Y,t) ) \big)$
    \item $\rho^{\eventually_{[t_1,t_2]} \varphi}( (Y,t) ) = \max_{t'\in[t+t_1, t+t_2]}\big(\rho^\varphi( (Y,t') )\big)$
    \item $\rho^{\always_{[t_1,t_2]} \varphi}( (Y,t) ) = \min_{t'\in[t+t_1, t+t_2]}\big(\rho^\varphi( (Y,t') )\big)$
    \item $\rho^{\varphi_1 \until_{[t_1,t_2]} \varphi_2}( (Y,t) ) =  \max_{t'\in[t+t_1, t+t_2]}\bigg( \\ \min\Big(\Big[\rho^{\varphi_2}( (Y,t') ), \min_{t'' \in[t+t_1,t')}\big(\rho^{\varphi_1}( (Y,t'') )\big)\Big]^T\Big)\bigg)$.
\end{itemize}
\end{definition}
Taking some liberty of notation, we write $\rho^\varphi(Y) = \rho^\varphi((Y,0))$.
{We note that some presentations of STL semantics define the until ($\until$) operator such that $\varphi_1$ must hold from $t$ to $t''$ \cite{raman2014model}. Following \cite{belta2019formal}, we enforce $\varphi_1$ only from $t+t_1$ to $t''$. }

As discussed above, the {quantitative} semantics lead naturally to a trajectory optimization formulation of Problem \ref{prob:main}:
\begin{align}\label{eq:original_optimization}
    \min_{U} & -\rho^\varphi(Y) \\
    \text{s.t. } & \x_{t+1} = f(\x_t,\u_t) \\
                 & \y_{t} = g(\x_t,\u_t) \\
                 & \x_0 \text{ given.}
\end{align}

A major challenge in solving this optimization problem comes from the non-smoothness of $\rho^\varphi$, which is induced by the use of $\min$ and $\max$ operators in the {quantitative} semantics. A smooth approximation of the robustness measure, denoted $\tilde{\rho}^\varphi$, can thus be provided by simply taking smooth approximations of the $\min$ and $\max$ operators \cite{pant2017smooth,gilpin2021smooth,varnai2020robustness}. 

A particularly appealing choice of smooth approximations are
\begin{equation}\label{eq:ourmin}
    \widetilde{\min}(a_1,\dots,a_m) = -\frac{1}{k_1}\log \Big(\sum_{i=1}^{m} e^{-k_1 a_{i}} \Big),
\end{equation}
 \begin{equation}\label{eq:ourmax}
    \widetilde{\max}(a_{1},\dots,a_{m}) = \frac{\sum_{i=1}^{m} a_{i} e^{k_2 a_{i}}}{\sum_{i=1}^{m} e^{k_2 a_{i}}},
\end{equation}
where $k_1,k_2 > 0$ are tuneable parameters. This particular choice is useful because both approximations are strict underapproximations ($\widetilde{\min} < \min$, $\widetilde{\max} < \max$), but as $k_1, k_2 \to \infty$, they approach the original operators. This means that {for STL specifications written in Postive Normal Form (PNF) \cite{raman2014model}}, if the resulting smooth robustness $\tilde{\rho}^{\varphi}$ is positive, then the signal must satisfy the specification. For further details on this approach, we refer the interested reader to \cite{gilpin2021smooth}. 

By replacing $\rho^\varphi$ with the smooth approximation $\tilde{\rho}^\varphi$, the optimization (\ref{eq:original_optimization}) becomes smooth, and can be solved with gradient-based techniques like SQP. 

\subsection{Differential Dynamic Programming}\label{subsec:ddp}

DDP is a trajectory optimization method for solving problems of the following form:
\begin{align}\label{eq:ddp_optimization}
    \min_{U} & \sum_{t=0}^T l_t(\x_t,\u_t) \\
    \text{s.t. } & \x_{t+1} = f(\x_t,\u_t) \\
                 & \x_0 \text{ given,}
\end{align}
where $f$ is a smooth, possibly nonlinear function as above, and the running cost $l$ is twice continuously differentiable in $\x_t$ and $\u_t$. This running cost need not be convex. 

The basic idea behind DDP is to iteratively update a nominal trajectory based on a local linear approximation of the dynamics and a local quadratic approximation of the optimal cost-to-go. This process can be summarized as follows:
\begin{enumerate}
    \item Guess a nominal control sequence $U = \u_0,\u_1,\dots,\u_T$.
    \item Forward pass: simulate the system forward, computing a linear approximation of the dynamics. 
    \item Backwards pass: starting from time $T$, use the linearized dynamics to compute a quadratic approximation of the optimal cost-to-go. 
    \item Update $U$ accordingly, typically via a line search. 
    \item Using the updated control sequence, repeat from (2). 
\end{enumerate}

For further details on this process, we refer the interested reader to \cite{mayne1966second,tassa2014control,li2004iterative}. DDP is similar in spirit to SQP, but takes advantage of structure of the trajectory optimization problem (\ref{eq:ddp_optimization}). Besides faster solve times, these structural insights lead to reduced sensitivity to local minima. Furthermore, the quadratic approximation of the optimal cost-to-go enables a full-state feedback controller about the nominal trajectory, at no extra computational cost. 

The empirical effectiveness of DDP for highly nonlinear systems has been well-demonstrated in the robotics literature: DDP has been used to control quadrupeds \cite{li2020hybrid,grandia2019feedback}, humanoids \cite{tassa2014control,koenemann2015whole}, and other high-DoF robots \cite{tassa2012synthesis,mastalli20crocoddyl}. 

\section{Differential Dynamic Programming for Signal Temporal Logic}\label{sec:main_results}

In this section, we demonstrate how the advantages of DDP---speed and scalability to high-dimensional nonlinear dynamics---can be brought to the STL synthesis problem. 

The main challenge in using DDP for STL synthesis is in formulating a smooth running cost like (\ref{eq:ddp_optimization}). As discussed above, the STL robustness measure is nonsmooth. Furthermore, the robustness measure cannot, in general, be written as a sum of running costs. For example, consider the specification $\eventually_{[0,5]} \varphi_1$. The robustness measure associated with $t=5$ depends on whether $\varphi_1$ has been satisfied at a previous timestep, so a simple sum of running costs is not sufficient. 

Smoothness is relatively easy to address: we simply use a smooth approximation of the robustness \cite{gilpin2021smooth}, $\tilde{\rho}^{\varphi}$. As noted in Section \ref{sec:background}, $\tilde{\rho}^{\varphi} < \rho^{\varphi}$ {for $\varphi$ in PNF}, and as $k_i \to \infty$, $\tilde{\rho}^{\varphi} \to \rho^\varphi$. 

To formulate a running cost, we need to make several simplifying assumptions. First, we consider a fragment of STL:
\begin{assumption}\label{as:fragment}
    Assume that the STL specification $\varphi$ can be written as follows:
    \begin{gather}
        \psi := \pi \mid \lnot \pi \mid \psi_1 \land \psi_2 \mid \psi_1 \lor \psi_2, \\
        \varphi := \always_{[t_1,t_2]} \psi \mid \eventually_{[t_1,t_2]} \psi \mid \psi_1 \until_{[t_1,t_2]} \psi_2 \mid \varphi_1 \land \varphi_2,
    \end{gather}
    where $\psi$ are \textit{state formulas} and $\varphi$ are \textit{path (STL) formulas}. 
\end{assumption}
{Note that this fragment is in PNF, i.e., negations are only applied at the predicate level. This ensures that $\tilde{\rho}^\varphi$ is a strict underapproximation of $\rho^\varphi$.} The primary difference between this fragment and full STL is that disjunctions are only permitted between state formulas, and nested temporal operators are excluded. {Note that this fragment is more expressive than the STL fragment considered
in many recent works \cite{lindemann2017prescribed,lindemann2018control,varnai2019prescribed}, as disjuctions are allowed between state formulas}.

The next simplifying assumption is inspired by an implicit assumption in \cite{lindemann2018control}, where control barrier functions enforce the timesteps at which particular predicates hold: 

\begin{assumption}\label{as:switching}
    Switching times for temporal operators are fixed. 
\end{assumption}
{For example, if $ \varphi =\eventually_{[t_1,t_2]} \psi$, we assume that $\psi$ holds at $t=t_2$. Similarly, if $ \varphi = \psi_1 \until_{[t_1,t_2]} \psi_2$, we assume that $\psi_2$ holds at $t=t_2$ and $\psi_1$ holds for all $t \in [t_1,t_2)$. This is slightly less restrictive than \cite{lindemann2018control}, since we impose no constraints on the signal other than at the switching time.}

{Any particular choice of switching times, whether chosen explicitly or by selecting a control barrier function, limits completeness. For example, if the switching times are fixed at $t_2$ as suggested above, the specification $(\eventually_{[0,10]} y > 1)\land(\eventually_{[0,10]} y < 0)$ is unsatisfiable. But with different choices of switching times, the problem may be solved. For the remainder of this letter, we assume that switching times are fixed at the end of the interval, but other valid choices could also be made.}

These simplifying assumptions allow us to formulate a running cost. The basic idea is to associate each timestep $t$ with zero or more state formulas, each of which depend only on $\y_t = g(\x_t,\u_t)$, and not on any future or past states. This process is formally described in Algorithm \ref{alg:ddp_stl}.

\begin{algorithm}
    \SetKwInOut{Input}{Input}
    \SetKwInOut{Output}{Output}
    
    \Input{STL specification $\varphi$; initial condition $\x_0$, parameters $k_1, k_2$; initial guess $U$}
    \Output{Satisfying trajectory $(X^*,U^*)$ or \textit{No Solution}}
    
    Initialize running cost $l_t(\x,\u) = null$ for $t \in [0,T]$
    
    \ForAll{path formula $\varphi_i$ in $\varphi$}{
        \If{ $\varphi_i = \always_{[t_1,t_2]}\psi$ }{
            $l_t(\x_t,\u_t) = \widetilde{\max}\{l_t, -\tilde{\rho}^\psi(g(\x_t,\u_t))\}$
            for all $t \in [t_1,t_2]$
        }
        \ElseIf{ $\varphi_i = \eventually_{[t_1,t_2]}\psi$ }{
            $l_t(\x_t,\u_t) = \widetilde{\max}\{l_t, -(t_2-t_1)\tilde{\rho}^\psi(g(\x_t,\u_t))\}$
            for $t = t_2$
        }
        \ElseIf{ $\varphi_i = \psi_1\until_{[t_1,t_2]}\psi_2$ }{
            $l_t(\x_t,\u_t) = \widetilde{\max}\{l_t, -\tilde{\rho}^\psi_1(g(\x_t,\u_t))\}$ for $t \in [t_1,t_2)$
            
            $l_t(\x_t,\u_t) = \widetilde{\max}\{l_t, -(t_2-t_1)\tilde{\rho}^\psi_2(g(\x_t,\u_t))\}$
            for $t = t_2$
        }
    }
    
    If $l_t(\x,\u) = null$ for any $t$, let $l_t(\x,\u) = 0$. 
    
    $X^*,U^* = DDP(\x_0,U)$
    
    \If{ $l_t(\x^*_t,\u^*_t) < 0 ~~\forall~ t$ }{
        \Return $X^*,U^*$
    }
    \Return \textit{No Solution}
    
    \caption{DDP-based STL Synthesis}
    \label{alg:ddp_stl}
\end{algorithm}

We start by initializing the running cost at each timestep with an empty value. Recall that (by Assumption \ref{as:fragment}) the specification can be written as $\varphi = \varphi_1 \land \varphi_2 \land \dots \land \varphi_N$. We iterate over each of these path formulas to set the running cost. Each $\varphi_i$ consists of a temporal operator (always, eventually, or until) acting over state formulas $\psi$. For each $\varphi_i$, we update the running cost using $\tilde{\rho}^\psi$, the smooth robustness measure associated with the state formula $\psi$. {This $\tilde{\rho}^\psi$ depends only on $\x_t$ and $\u_t$, not on any future or past states or control inputs.}

There are several features to mention here. First, Assumption \ref{as:switching} allows us to {associate} each state formula with a unique set of timesteps. There may be several state formulas associated with a given timestep: in this case, we combine them with the smooth maximum. This ensures that both state formulas must be satisfied for $l_t(\x_t,\u_t) < 0$. Additionally, we weight the cost associated with the until and eventually operators by the length of the interval $t_2-t_1$. This heuristic ensures that there is adequate ``incentive'' to satisfy these subformulas, since the cost is only associated with a single time instant. 

Given such a smooth running cost, we can simply write the STL trajectory optimization problem in the form of (\ref{eq:ddp_optimization}) and use DDP to find a (locally) optimal trajectory $(X^*, U^*)$. Finally, if the running cost $l_t(\x^*_t,\u^*_t)$ is negative at each timestep, we conclude that $\tilde{\rho}^{\varphi}(Y^*) > 0 \implies \rho^\varphi(Y^*) > 0 \implies Y^* \vDash \varphi$. 

Algorithm \ref{alg:ddp_stl} is sound, as shown in the following Proposition:

\begin{proposition}[Soundness]
    Given a specification $\varphi$, a system (\ref{eq:system}), and an initial condition $\x_0$, if Algorithm \ref{alg:ddp_stl} returns a solution $(X^*, U^*)$, then the associated output trajectory $Y^*$ satisfies the STL specification $\varphi$.
\end{proposition}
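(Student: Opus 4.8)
The plan is to trace the only success condition of Algorithm~\ref{alg:ddp_stl} --- that $l_t(\x^*_t,\u^*_t) < 0$ for every $t \in [0,T]$ --- down to pointwise satisfaction of the state formulas, and then lift this to $Y^* \vDash \varphi$ using Assumptions~\ref{as:fragment} and~\ref{as:switching} together with the quantitative semantics of Definition~\ref{def:robust_semantics}. For any timestep $t$ at which the running cost is not identically zero, $l_t$ is by construction a (nested) smooth maximum $\widetilde{\max}$ of finitely many terms $-w\,\tilde\rho^{\psi}(g(\x_t,\u_t))$ with weights $w>0$, one term for each state formula $\psi$ that the algorithm assigns to $t$. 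The first step is to deduce from $l_t(\x^*_t,\u^*_t)<0$ that $\tilde\rho^{\psi}(g(\x^*_t,\u^*_t))>0$ for every such $\psi$; timesteps with $l_t\equiv 0$ carry no constraint.

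Given that, I would pass from smooth to exact robustness: since the fragment of Assumption~\ref{as:fragment} is in PNF, so is each state formula $\psi$, and the property recalled in Section~\ref{sec:background} gives $\tilde\rho^{\psi}\le\rho^{\psi}$ pointwise; hence $\rho^{\psi}(g(\x^*_t,\u^*_t))>0$, i.e.\ the output $\y^*_t$ satisfies $\psi$ at time $t$. Assumption~\ref{as:switching} then lets me lift each path subformula. Writing $\varphi=\varphi_1\wedge\dots\wedge\varphi_N$, I check each case of Assumption~\ref{as:fragment} against Definition~\ref{def:robust_semantics}: if $\varphi_i=\always_{[t_1,t_2]}\psi$ then $\y^*_t\vDash\psi$ for all $t\in[t_1,t_2]$, so $\rho^{\varphi_i}(Y^*)=\min_{t\in[t_1,t_2]}\rho^{\psi}(\y^*_t)>0$; if $\varphi_i=\eventually_{[t_1,t_2]}\psi$ then $\y^*_{t_2}\vDash\psi$ with $t_2\in[t_1,t_2]$, so the $\max$ in the semantics is positive; if $\varphi_i=\psi_1\until_{[t_1,t_2]}\psi_2$ then $\y^*_t\vDash\psi_1$ for $t\in[t_1,t_2)$ and $\y^*_{t_2}\vDash\psi_2$, which makes the $t'=t_2$ term in the until semantics positive. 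In every case $Y^*\vDash\varphi_i$; since $\rho^{\varphi}(Y^*)=\min_i\rho^{\varphi_i}(Y^*)>0$, Definition~\ref{def:robust_semantics} gives $Y^*\vDash\varphi$.

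The delicate step is the first one. Because $\widetilde{\max}$ is a strict \emph{under}-approximation of $\max$ --- indeed a weighted mean of its arguments --- $\widetilde{\max}(a_1,\dots,a_m)<0$ forces only $\min_i a_i<0$ in full generality, not $a_i<0$ for every $i$; thus the intuitive claim that a negative folded running cost certifies $\tilde\rho^{\psi}>0$ for each active state formula requires justification. I would make it rigorous either by restricting to the regime where $k_1,k_2$ are large enough that $\widetilde{\max}$ reproduces the sign pattern of $\max$ on the finitely many values that arise, or --- more robustly --- by noting that the final feasibility test may equivalently be run with the exact $\rho^{\psi}$ in place of $\tilde\rho^{\psi}$. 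With that step secured, everything else is routine case-checking over the fragment and the semantics of Definition~\ref{def:robust_semantics}.
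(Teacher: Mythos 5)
Your argument follows the same route as the paper's own proof: reduce the success test $l_t(\x^*_t,\u^*_t)<0$ to positivity of the smooth (hence, by under-approximation, the exact) robustness of each state formula at its assigned timesteps, lift to each path formula via Assumption~\ref{as:switching} and the semantics of Definition~\ref{def:robust_semantics}, and conjoin; your case analysis for $\always$, $\eventually$, and $\until$ is exactly the paper's. The one place you go beyond the paper is the ``delicate step,'' and you are right to flag it: the paper's proof simply asserts $0 < -l_t(\x^*_t,\u^*_t) < \rho^{\psi}(\y_t)$ ``by construction,'' which is valid when a single state formula is assigned to timestep $t$ (then $l_t=-w\,\tilde\rho^{\psi}$ and $\tilde\rho^{\psi}<\rho^{\psi}$ does the work), but is not justified when several state formulas are folded together with $\widetilde{\max}$. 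Since $\widetilde{\max}$ in~(\ref{eq:ourmax}) is a convex combination of its arguments, $\widetilde{\max}(a_1,\dots,a_m)<0$ only forces \emph{some} $a_i<0$, not all of them; for instance with $k_2=10$ one computes $\widetilde{\max}(0.001,-0.1)\approx-0.026<0$ even though the first argument is positive, so the test $l_t<0$ can pass while one assigned state formula has $\tilde\rho^{\psi}\le 0$ (and possibly $\rho^{\psi}\le 0$). The paper's claim that the smooth maximum ``ensures that both state formulas must be satisfied'' points the implication the wrong way: under-approximation gives $\max<0\Rightarrow\widetilde{\max}<0$, not the converse needed here. Your proposed repairs are the right ones---check the realized values formula-by-formula (with $\rho^{\psi}$ or each $\tilde\rho^{\psi}$ individually) in the final feasibility test, or impose a condition on $k_2$ relative to the realized values---and the second in particular closes the gap cleanly; the remainder of your argument is correct and matches the paper.
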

\begin{proof}
    Note that the existence of a solution implies $l_t(\x^*_t,\u^*_t) < 0$ for all $t$. Recalling that $\varphi$ can be written as the conjuction of path formulas, we consider each temporal operator separately: 
    
    Case 1: $\varphi_i = \always_{[t_1,t_2]}\psi$. By construction, $0 < -l_t(\x^*_t,\u^*_t) < \rho^\psi(\y_t)$ for all $t \in [t_1,t_2]$, and so $Y^*$ satisfies $\varphi_i$.
    
    Case 2: $\varphi_i = \eventually_{[t_1,t_2]}\psi$. By construction, $0 < -l_t(\x^*_t,\u^*_t) < \rho^\psi(\y_t)$ for $t=t_2$, and so $Y^*$ satisfies $\varphi_i$.
    
    Case 3: $\varphi_i = \psi_1\until_{[t_1,t_2]}\psi_2$. By construction, $0 < -l_t(\x^*_t,\u^*_t) < \rho^{\psi_2}(\y_t)$ for $t=t_2$, and $0 < -l_t(\x^*_t,\u^*_t) < \rho^{\psi_1}(\y_t)$ for all $t \in [t_1,t_2)$. Therefor $Y^*$ satisfies $\varphi_i$. 
    
    Now we have that $\varphi = \bigwedge_{i} \varphi_i$ and $Y^* \vDash \varphi_i ~\forall i$, so $Y^* \vDash \varphi$ and the theorem holds. 
\end{proof}

This soundness property means that if Algorithm \ref{alg:ddp_stl} returns a solution, this solution solves Problem \ref{prob:main}. It should be noted, however, that Assumption \ref{as:switching} and the use of DDP lead to some conservatism, so this approach is not complete: if Algorithm \ref{alg:ddp_stl} does not return a solution, Problem \ref{prob:main} may still have a solution. {In practice, Algorithm \ref{alg:ddp_stl} is less likely to find a solution for STL formulas with many local minima, and for STL formulas that depend heavily on the choice of switching times, such as the example following Assumption \ref{as:switching}.}

\begin{remark}
    If Algorithm \ref{alg:ddp_stl} returns no solution, there are several possibilities for attempting to find a satisfying trajectory. The simplest step would be re-run Algorithm \ref{alg:ddp_stl} with a different initial guess, $U$. While DDP tends to be less susceptible than other gradient-based methods to local minima, a new initial guess can enable convergence to a viable solution. Another possibility is to increase the value of the parameters $k_1$, $k_2$. This ensures that the smooth robustness measure $\tilde{\rho}^\psi$ is a closer approximation of the true robustness measure, and can enable satisfaction. {Of course, the closeness of this approximation must be balanced with the poorer numerical conditioning associated with higher $k_i$}. If the solution from the previous (failed) iteration of Algorithm \ref{alg:ddp_stl} is used as an initial guess, however, {similar methods for iterative approximation of contact constrains in the manipulation literature \cite{onol2019contact} suggest that numerical issues are less likely in practice}. 
\end{remark}

Finally, as discussed in Section \ref{subsec:ddp} above, DDP provides a local full-state feedback controller along the nominal trajectory at no extra computational cost, by virtue of locally approximating the optimal cost-to-go. This feedback is especially useful for CPS in the real world, which are inevitably subject to a variety of external disturbances and modeling errors.

{
\begin{remark}
    The complexity of one iteration of DDP is $O(Tm^3)$ \cite{tassa2008receding}. The number of iterations required for convergence depends on the initial guess and the particular problem at hand, but DDP typically terminates within about 100 timesteps. So the complexity of our approach is roughly linear in the STL time bound $T$.
\end{remark}}

\section{Simulation Results}\label{sec:simulation}

In this section, we present simulation experiments supporting our main results. Except where otherwise noted, all implementations were in python, with gradients and Hessians computed numerically. All experiments were performed on a laptop with an i7 processor and 32GB RAM. 

\subsection{Mobile Robot Motion Planning}

We first consider a standard baseline example for STL trajectory optimization: motion planning for a simple mobile robot. We assume that the robot's dynamics are given by a simple single integrator,
\begin{gather*}
    \x_{t+1} = \x_t + \u_tdt, \quad
    \y_t = \x_t,
\end{gather*}
where $\x_t \in \mathbb{R}^2$ is the robot's position and $dt=0.01s$ is the sampling period.

We first consider one of the simplest robot motion planning problems: reaching a goal while avoiding an obstacle. This specification can be written as
\begin{equation}\label{eq:reach_avoid}
    \varphi = \always_{[0,100]} \lnot \psi_{obstacle} \land \eventually_{[0,100]} \psi_{goal},
\end{equation}
where $\psi_{obstacle}$ and $\psi_{goal}$ are state formulas representing entering the obstacle and goal regions respectively. 

We compare our approach with \cite{gilpin2021smooth} as a representative approach for optimizing over a smoothed cost function with SQP. For our approach, we use a python implementation of \cite{tassa2014control} to solve the DDP optimization. For the baseline SQP approach, we use Scipy \cite{2020SciPy-NMeth} and computed gradients symbolically with autograd \cite{maclaurin2015autograd}. This gives some advantage to the baseline method (Scipy includes bindings to compiled C code, and analytical gradients are available). A random control sequence ($\u_t$ sampled from a uniform distribution in $[-1,1]$) was used to initialize both methods. {$k_1=k_2=10$ was used for all experiments. }

\begin{figure}
    \begin{subfigure}{0.49\linewidth}
        \centering
        \includegraphics[width=\linewidth]{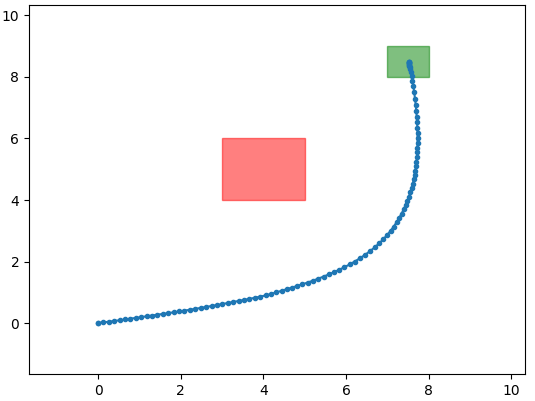}
        \caption{SQP approach \cite{gilpin2021smooth}. Optimization time 22.4s}
        \label{fig:reach_avoid_sqp}
    \end{subfigure}
    \begin{subfigure}{0.49\linewidth}
        \centering
        \includegraphics[width=\linewidth]{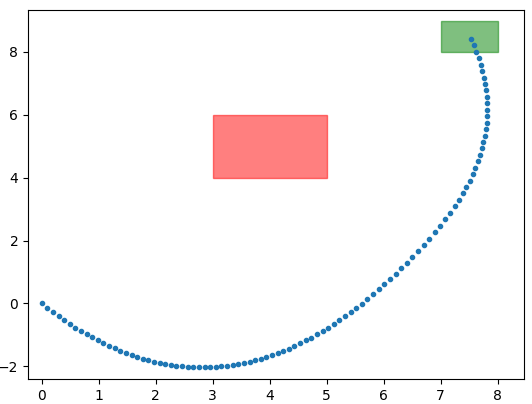}
        \caption{Our approach (DDP). Optimization time 2.3s}
        \label{fig:reach_avoid_ddp}
    \end{subfigure}
    \caption{A mobile robot (blue dot) must navigate around an obstacle (red) and reach a goal region (green), as encoded in the STL specification (\ref{eq:reach_avoid}). We compare our proposed approach to a standard smooth optimization method. Both methods find a satisfying trajectory, but our proposed approach is much faster.}
    \label{fig:reach_avoid}
\end{figure}

The results are shown in Figure \ref{fig:reach_avoid}. Both methods find a satisfying solution. Despite the fact that the baseline SQP approach has access to gradients that are computed analytically, and our approach uses a numerical computation of gradients, our approach outperforms the baseline in terms of computation time by roughly an order of magnitude (2.3s vs. 22.4s). 

Of course, this type of reach-avoid specification could also be achieved using a variety of motion planning techniques from the robotics literature like Rapidly-expanding Random Trees and Probabilistic Roadmap algorithms \cite{thrun2002probabilistic}. This motivates us to consider a more complex motion planning example that includes logical decision making elements. 

The scenario shown in Figure \ref{fig:either_or} represents something of a benchmark problem in the STL planning literature \cite{belta2019formal,mehdipour2019arithmetic,gilpin2021smooth}. A robot must visit one of two intermediate targets (blue) and avoid collisions with an obstacle (red) before reaching a goal region (green). This specification can be written formally as 
\begin{equation}\label{eq:either_or}
    \varphi = (\lnot \psi_{obstacle} \until_{[0,50]} \psi_{goal}) \land (\eventually_{[0,33]} \psi_{target~1} \lor \psi_{target~2}).
\end{equation}

Again, we compare our proposed approach with a baseline smooth optimization method \cite{gilpin2021smooth} which uses SQP. Both resulting trajectories, shown in Figure \ref{fig:either_or}, satisfy the specification, but our approach finds a satisfying solution faster (2.2s vs 16.6s). 

\begin{figure}
    \begin{subfigure}{0.49\linewidth}
        \centering
        \includegraphics[width=\linewidth]{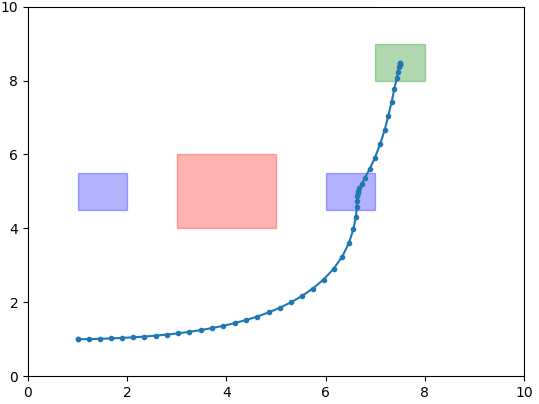}
        \caption{SQP approach \cite{gilpin2021smooth}. Optimization time 16.6s}
        \label{fig:either_or_sqp}
    \end{subfigure}
    \begin{subfigure}{0.49\linewidth}
        \centering
        \includegraphics[width=\linewidth]{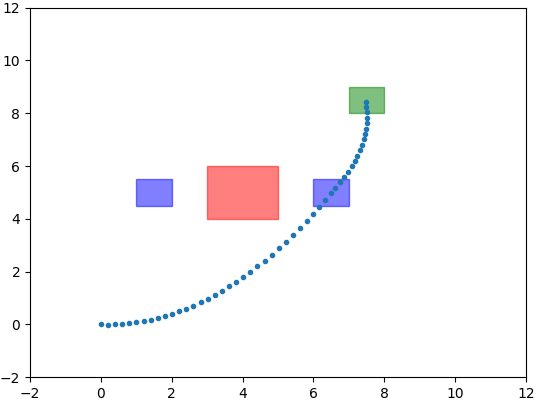}
        \caption{Our approach (DDP). Optimization time 2.2s}
        \label{fig:either_or_ddp}
    \end{subfigure}
    \caption{A mobile robot must navigate around an obstacle (red) and reach one of two target regions (blue) before arriving at a goal (green), as encoded in specification (\ref{eq:either_or}). Again, both our approach and the baseline find satisfying solutions, but our DDP-based approach finds a solution significantly faster than the baseline SQP approach.}
    \label{fig:either_or}
\end{figure}

\subsection{High-DoF Manipulator Control}

The experiments above demonstrate that our proposed method outperforms state-of-the-art gradient-based methods on several benchmark examples with simple dynamics. The primary advantage of our proposed approach, however, is scalability to high-dimensional nonlinear dynamics. 

To this end, we present an example of optimizing over the full nonlinear dynamics of a 7-DoF manipulator. Specifically, we use the Toro Arm model supplied with Crocoddyl \cite{mastalli20crocoddyl}. 

The manipulator is governed by the following dynamics,
\begin{equation}\label{eq:manipulator_dynamics}
    \M(\q)\ddot{\q} + \C(\q,\dot{\q})\dot{\q} + \btau_g = \btau,
\end{equation}
where $\q$ are joint angles, $\M$ is the positive definite mass matrix, the term $\C\dot{\q}$ collects Coriolis and centripetal terms, $\btau_g$ are torques due to gravity, and $\btau$ are control torques. We consider the state to be $\x_t = [\q^T ~ \dot{\q}^T]^T$, the output to be joint angles $\y_t = \q$, and the control input $\u_t = \btau$ to be the torques applied at each joint. We formulate a difference equation of the form (\ref{eq:system}) by using a forward-euler integration scheme at 200Hz. 

For all dynamics computations, we use the open-source library Crocoddyl \cite{mastalli20crocoddyl}, which provides python bindings to underlying C++ code. For trajectory optimization, we use the same python implementation of \cite{tassa2014control} as above, again computing gradients and Hessians numerically. 

The scenario we consider is illustrated in Figure \ref{fig:manipulator_1}. Essentially, we want the robot's end effector to reach one of two target positions, denoted $\mathbf{p}_1$ and $\mathbf{p}_2$, and shown in green in the figures. We first solve an inverse kinematics problem numerically to compute joint angles associated with reaching each target, $\q^{nom}_1$ and $\q^{nom}_2$. We then write a specification as follows:
\begin{equation}\label{eq:manipulator_spec}
    \varphi = \always_{[40,50]} (\|\q-\q^{nom}_1\| \leq \delta) \lor (\|\q-\q^{nom}_2\| \leq \delta) 
\end{equation}

This specification ensures that we must reach a state $\q$ within some distance $\delta = 0.01$ of one of the target joint angles within 40 timesteps, and remain there for 10 timesteps. 

An initial guess of $\u_t = \btau_g$ was used to compensate for gravitational effects. We applied Algorithm \ref{alg:ddp_stl} from two initial conditions, starting on either side of the two targets. In both cases, the DDP solver terminated successfully after about 160s. We suspect that a significant portion of the solve time is dominated by numerically computing gradients for such a high-dimensional system. {In contrast, the SQP method ran for over 5 hours without converging to a valid solution. }


\section{Conclusion}\label{sec:conclusion}

We presented a method of STL-based trajectory optimization for high dimensional nonlinear systems based on differential dynamic programming. We proved the soundness of our approach, and demonstrated scalability to the full nonlinear dynamics of a 7-DoF robot arm. Additionally, simulation experiments on a benchmark problems with simpler dynamics indicate a roughly order-of-magnitude speed improvement over state-of-the-art gradient-based methods. Future work will focus on applying these methods to more complex tasks, other complex CPS like legged robots, and for model predictive control. {Extensions to model predictive control schemes are of particular interest, since optimal solutions from previous iterations could be used to warm-start the solver, potentially leading to significant improvements in computation time.}

\bibliographystyle{unsrt}
\bibliography{references}

\end{document}